\theoremstyle{plain}
\newtheorem{theorem}{Theorem}
\newtheorem{lemma}[theorem]{Lemma}
\theoremstyle{definition}
\DeclareSymbolFont{upshape}{OT1}{cmr}{m}{n}
\DeclareMathSymbol{\Gamma}{\mathalpha}{upshape}{0}
\DeclareMathSymbol{\Delta}{\mathalpha}{upshape}{1}
\newcommand{\ie}{i.e.\ }
\newcommand{\wrt}{w.r.t.\ }
\renewcommand{\epsilon}{\varepsilon}
\newcommand{\eps}{\epsilon}
\renewcommand{\le}{\leqslant}
\renewcommand{\ge}{\geqslant}
\newcommand{\axiom}[1]{\ensuremath{\langle #1\rangle}\xspace}
\newcommand{\assert}[2]{\ensuremath{#1\hspace{0.08em}{:}\hspace{0.05em}#2}\xspace}
\newcommand{\Go}{\ensuremath{\mathsf{G}}\xspace}
\newcommand{\ra}{\ensuremath{\Rightarrow}\xspace}
\newcommand{\atLeast}[3]{\ensuremath{{\ge}#1\,#2.#3}\xspace}
\newcommand{\atMost}[3]{\ensuremath{{\le}#1\,#2.#3}\xspace}
\newcommand{\ls}{\ensuremath{\preccurlyeq}}
\newcommand{\red}{\ensuremath{\mathsf{red}}\xspace}
\newcommand{\inv}{\ensuremath{\mathsf{inv}}\xspace}
\newcommand{\rol}{\ensuremath{\mathsf{rol}}\xspace}
\newcommand{\fuzzy}{\fsf}
\newcommand{\crisp}{\csf}
\newcommand{\values}{\ensuremath{\Vmc_\Omc}\xspace}
\newcommand{\parent}[1]{\ensuremath{#1_\uparrow}\xspace}
\newcommand{\up}[1]{\ensuremath{\langle#1\rangle_\uparrow}\xspace}
\renewcommand{\ALCQ}{\ensuremath{\ALC\kern-0.16em\Qmc}\xspace}
\newcommand{\IALC}{\ensuremath{\mathfrak{I}\kern-0.14em\ALC}\xspace}
\newcommand{\IALCQ}{\ensuremath{\mathfrak{I}\kern-0.14em\ALCQ}\xspace}
\newcommand{\SHIQ}{\ensuremath{\Smc\kern-0.05em\Hmc\Imc\kern-0.18em\Qmc}\xspace}
\newcommand{\SRIQ}{\ensuremath{\Smc\kern-0.05em\Rmc\kern-0.03em\Imc\kern-0.18em\Qmc}\xspace}
\newcommand{\SROQ}{\ensuremath{\Smc\kern-0.05em\Rmc\kern-0.05em\Omc\kern-0.14em\Qmc}\xspace}
\newcommand{\SHOI}{\ensuremath{\Smc\kern-0.05em\Hmc\kern-0.1em\Omc\kern-0.03em\Imc}\xspace}
\newcommand{\SROI}{\ensuremath{\Smc\kern-0.05em\Rmc\kern-0.05em\Omc\kern-0.03em\Imc}\xspace}
\newcommand{\SROIQ}{\ensuremath{\Smc\kern-0.05em\Rmc\kern-0.05em\Omc\kern-0.03em\Imc\kern-0.18em\Qmc}\xspace}
\newcommand{\ALCIQ}{\ensuremath{\ALC\kern-0.03em\Imc\kern-0.18em\Qmc}\xspace}
\newcommand{\ALCOIQ}{\ensuremath{\ALC\kern-0.16em\Omc\kern-0.03em\Imc\kern-0.18em\Qmc}\xspace}
\renewcommand{\path}{\varrho}
\newcommand{\tail}{\ensuremath{\mathsf{tail}}\xspace}
\newcommand{\prev}{\ensuremath{\mathsf{prev}}\xspace}
\newcommand{\role}{\ensuremath{\mathsf{role}}\xspace}
\newcommand{\oc}[1]{\scalebox{0.93}{\ensuremath{\fbox{$#1$}}}\xspace}
\title{Reasoning in Infinitely Valued \Go-\IALCQ}
\author{Stefan Borgwardt \\
	Theoretical Computer Science, \\ TU Dresden, Germany \\
	\url{Stefan.Borgwardt@tu-dresden.de}
\And
	Rafael Pe{\~n}aloza \\
	%\thanks{Most of this work was developed while this author was still affiliated with
	%TU Dresden and the Center for Advancing Electronics Dresden, Germany.} \\
	KRDB Research Centre, \\ Free University of Bozen-Bolzano, Italy \\
	\url{rafael.penaloza@unibz.it}
}
\begin{document}

\maketitle

\begin{abstract}
%\todo[inline]{more AI}
  Fuzzy Description Logics (FDLs) are logic-based formalisms used to represent and reason with vague
  or imprecise knowledge.
  It has been recently shown that reasoning in most FDLs using truth values from the interval
  $[0,1]$ becomes undecidable in the presence of a negation
  constructor and general concept inclusion axioms.
  One exception to this negative result are FDLs whose semantics is based on the infinitely valued
  Gödel t-norm~(\Go).
  In this paper, we extend previous decidability results for \Go-\IALC to deal also with qualified
  number restrictions.
  Our novel approach is based on a combination of the known crispification
  technique for finitely valued FDLs and the automata-based procedure originally developed for
  reasoning in \mbox{\Go-\IALC}.
   The proposed approach combines the advantages of these two methods, while removing their
   respective drawbacks.
\end{abstract}

\section{Introduction}

It is well-known that one of the main requirements for the development of an intelligent application is a formalism
capable of representing and handling knowledge without ambiguity.
Description Logics (DLs) are a well-studied family of knowledge representation formalisms~\cite{BCM+07}.
They constitute the logical backbone of the standard Semantic Web ontology language OWL\,2,%
\footnote{\url{http://www.w3.org/TR/owl2-overview/}} and its profiles,
and have been successfully applied to represent the knowledge of many and diverse application domains,
particularly in the bio-medical sciences.

DLs describe the domain knowledge using \emph{concepts} (such as
\textsf{Patient}) that represent sets of individuals, and \emph{roles}
(\textsf{hasRelative}) that represent connections between individuals.
\emph{Ontologies} are collections of axioms formulated over these concepts and
roles, which restrict their possible interpretations. 
The typical axioms considered in
DLs are \emph{assertions}, like $\assert{\textsf{bob}}{\textsf{Patient}}$, providing knowledge about
specific individuals; 
and \emph{general concept inclusions (GCIs)}, such as
$\textsf{Patient}\sqsubseteq\textsf{Human}$, which express general relations between concepts.
Different DLs are characterized by the constructors allowed to generate
complex concepts and roles from atomic ones.
\ALC~\cite{ScSm-AI91} is a prototypical DL of intermediate expressivity that
contains the  concept constructors conjunction ($C\sqcap D$), negation
($\lnot C$), and existential restriction ($\exists r.C$ for a role~$r$).
If additionally qualified number restrictions ($\atLeast{n}{r}{C}$ for
$n\in\naturals$) are allowed, the resulting logic is denoted by \ALCQ.
Common reasoning problems in \ALCQ, such as consistency of ontologies or
subsumption between concepts, are known to be
\ExpTime-complete~\cite{Schi-IJCAI91,Tobi-01}.

Fuzzy Description Logics (FDLs) have been introduced as extensions of classical DLs
to represent and reason with vague knowledge. The main idea is to consider
all the truth values from the interval $[0,1]$
instead of only \emph{true} and \emph{false}. In this way, it is possible give a more
fine-grained semantics to inherently vague concepts like $\textsf{LowFrequency}$ or
$\textsf{HighConcentration}$, which can be found in biomedical ontologies like
{\sc Snomed\,CT},%
\footnote{\url{http://www.ihtsdo.org/snomed-ct/}}
and Galen.%
\footnote{\url{http://www.opengalen.org/}}
The different members of the family of FDLs are characterized not only by the constructors they 
allow, but also by the way these constructors are interpreted.

To interpret conjunction in complex concepts like
\begin{align*}
\exists\textsf{hasHeartRate}.\textsf{LowFrequency}\sqcap {} \qquad \qquad \qquad \\
  \exists\textsf{hasBloodAlcohol}.\textsf{HighConcentration},
\end{align*}
a popular
approach is to use so-called \emph{t-norms}~\cite{KlMP-00}.
The semantics of the other logical constructors can then be
derived from these t-norms in a principled way, as suggested by~\citeauthor{Haje-01}~\shortcite{Haje-01}.
Following the principles of mathematical fuzzy logic, existential
restrictions are interpreted as suprema
of truth values. However, to avoid problems with infinitely many truth
values, reasoning in fuzzy DLs is often restricted to so-called \emph{witnessed
models}~\cite{Haje-FSS05}, in which these suprema are required to be maxima;
i.e., the degree is witnessed by at least one domain element.

Unfortunately, reasoning in most FDLs becomes undecidable when the logic allows to use
GCIs and negation under witnessed model
semantics~\cite{BaPe-FroCoS11,CeSt-ISc13,BoDP-AI15}.
One of the few exceptions known are FDLs using the \emph{Gödel} t-norm 
defined as $\min\{x,y\}$
to interpret conjunctions~\cite{BoDP-KR14}.
Despite not being as well-behaved as finitely valued FDLs, which use a finite
total order of truth values instead of the infinite interval $[0,1]$~\cite{BoPe-JoDS13}, it has been
shown using an automata-based approach that reasoning in Gödel extensions of
\ALC exhibits the same complexity as in the classical case, \ie it is
\ExpTime-complete.
A major drawback of this approach is that it always has an exponential runtime,
even when the input ontology has a simple form.

In this paper, we extend the results of~\cite{BoDP-KR14} to deal with
qualified number restrictions, showing again that the complexity of reasoning remains the 
same as for the classical case; i.e., it is \ExpTime-complete.
To this end, we focus only on the problem of \emph{local
consistency}, which is a generalization of the classical concept satisfiability
problem.
We follow a more practical approach that combines the automata-based
construction from~\cite{BoDP-KR14} with reduction techniques developed for
finitely valued FDLs~\cite{Stra-JELIA04,BDGS-IJAR09,BoSt-13}.
We exploit the forest model property of classical \ALCQ~\cite{Kaza-JELIA04} to
encode order relationships between concepts in a fuzzy interpretation in a
manner similar to the Hintikka trees from~\cite{BoDP-KR14}.
However, instead of using automata to determine the existence of such trees, we
reduce the fuzzy ontology directly into a classical \ALCQ ontology whose
local consistency is equivalent to that of the original ontology.
This enables us to use optimized reasoners for classical DLs.
In addition to the \emph{cut-concepts} of the form $\oc{C\ge q}$ for a fuzzy
concept~$C$ and a value~$q$, which are used in the reductions for finitely
valued DLs~\cite{Stra-JELIA04,BDGS-IJAR09,BoSt-13}, we employ \emph{order
concepts} $\oc{C\le D}$ expressing relationships between fuzzy concepts.
Contrary to the reductions for finitely valued Gödel
FDLs presented 
by~\citeauthor{BDGS-IJAR09}~\shortcite{BDGS-IJAR09,BDGS-IJUF12}, 
our reduction produces a classical ontology whose size is polynomial in the 
size of the input
fuzzy ontology. Thus, we obtain tight complexity bounds for 
reasoning in this FDL~\cite{Tobi-01}.
An extended version of this paper appears in~\cite{BoPe-FroCoS15}.

%\todo[inline]{mention that this has been retracted from the DL workshop (in
%case we get the same reviewers)?}
%\todo[inline]{Should we explain also the sudden disappearance of the result for
%consistency?}

\section{Preliminaries}

For the rest of this paper, we focus solely on vague statements that take truth
degrees from the infinite interval $[0,1]$, where the \emph{Gödel t-norm}, defined by $\min\{x,y\}$,
is used to interpret logical conjunction. The semantics of implications is given by the
\emph{residuum} of this t-norm; that is,
\[ x\ra y:=\begin{cases}
  1 &\text{if $x\le y$,} \\
  y &\text{otherwise.}
\end{cases} \]
We use both the \emph{residual negation} $\ominus x:=x\ra 0$ and the
\emph{involutive negation} ${\sim}x:=1-x$ in the rest of this paper.

We first recall some basic definitions from~\cite{BoDP-KR14}, which will be
used extensively in the proofs throughout this work.
An \emph{order structure} $S$ is a finite set containing at least the
numbers~$0$, $0.5$, and~$1$, together with an involutive unary operation
$\inv\colon S\to S$ such that $\inv(x)=1-x$ for all $x\in S\cap[0,1]$.
A \emph{total preorder} over~$S$ is a transitive and total binary relation
${\ls}\subseteq S\times S$. For $x,y\in S$, we write $x\equiv y$ if
$x\ls y$ and $y\ls x$.
Notice that $\equiv$ is an equivalence relation on~$S$.
The total preorders considered in~\cite{BoDP-KR14} have to satisfy additional
properties; for instance, that $0$ and $1$ are always the least and greatest elements,
respectively.
These properties can be found in our reduction in the axioms of $\red(\Umc)$
(see Section~\ref{sec:local-consistency} for more details).

The syntax of the FDL \Go-\IALCQ is the same as that of classical \ALCQ, with
the addition of the implication constructor (denoted by the use of~\Imf at the beginning of the name).
This constructor is often added to FDLs, as the residuum cannot, in general, be
expressed using only the t\mbox{-}norm and negation operators, in contrast to the classical 
semantics. In particular, this holds for the G\"odel t-norm and its residuum, which is the focus of this
work.
Let now \NC, \NR, and \NI be mutually disjoint sets of \emph{concept},
\emph{role}, and \emph{individual names}, respectively. \emph{Concepts} of
\Go-\IALCQ are built using the syntax rule
\[ C,D::=\top\mid A\mid \lnot C\mid C\sqcap D\mid C\to D\mid\forall r.C\mid
  \atLeast{n}{r}{C}, \]
where $A\in\NC$, $r\in\NR$, $C,D$ are concepts, and $n\in\naturals$.
We use the abbreviations 
\begin{align*}
\bot:={} & \lnot\top, \\
C\sqcup D:= {} & \lnot(\lnot C\sqcap\lnot D), \\
\exists r.C:= {} & \atLeast{1}{r}{C}, \text{\quad \quad \quad and} \\
\atMost{n}{r}{C}:= {} & \lnot(\atLeast{(n+1)}{r}{C})
\end{align*}
Notice that~\citeauthor{BDGS-IJUF12}\ consider a different definition 
of \emph{at-most restrictions}, which
uses the residual negation; that is, they define
$\atMost{n}{r}{C}:=(\atLeast{(n+1)}{r}{C})\to\bot$~\shortcite{BDGS-IJUF12}.
This has the strange side effect that the value of $\atMost{n}{r}{C}$ is always
either~$0$ or~$1$ (see the semantics below).
However, this discrepancy in definitions is not an issue since our algorithm
can handle both cases.

The semantics of this logic is based on interpretations. A \emph{\Go-interpretation} is a pair
$\Imc=(\Delta^\Imc,\cdot^\Imc)$, where $\Delta^\Imc$ is a non-empty set called the \emph{domain}, and
$\cdot^\Imc$ is the \emph{interpretation function} that assigns to each individual name $a\in\NI$
an element $a^\Imc\in\Delta^\Imc$, to each concept name $A\in\NC$ a fuzzy set
$A^\Imc\colon\Delta^\Imc\to[0,1]$, and to each role name $r\in\NR$ a fuzzy
binary relation $r^\Imc\colon\Delta^\Imc\times\Delta^\Imc\to[0,1]$.
%\footnote{When the semantics is restricted to the values $0$ and $1$, we obtain
%structures that are isomorphic to classical interpretations.}
%
The interpretation of complex concepts is obtained from the semantics of
first-order fuzzy logics via the well-known translation from DL
concepts to first-order logic~\cite{Stra-JAIR01,BDGS-IJUF12}, \ie for all
$d\in\Delta^\Imc$,
\begin{align*}
  \top^\Imc(d) &:= 1 \\
  (\lnot C)^\Imc(d) &:= 1-C^\Imc(d) \\
  (C\sqcap D)^\Imc(d) &:= \min\{C^\Imc(d),D^\Imc(d)\} \\
  (C\to D)^\Imc(d) &:= C^\Imc(d)\ra D^\Imc(d) \\
%  (\exists r.C)^\Imc(d) &:= 
%\sup_{e\in\Delta^\Imc}\min\{r^\Imc(d,e),C^\Imc(e)\}
%    \\
  (\forall r.C)^\Imc(d) &:= \inf_{e\in\Delta^\Imc}r^\Imc(d,e)\ra C^\Imc(e) \\
  (\atLeast{n}{r}{C})^\Imc(d) &:=
    \sup_{\substack{e_1,\dots,e_n\in\Delta^\Imc\\\text{pairwise different}}}
      \min_{i=1}^n\min\{r^\Imc(d,e_i),C^\Imc(e_i)\}
%  (\atMost{n}{r}{C})^\Imc(d) &:=
%    \inf_{\substack{e_1,\dots,e_{n+1}\in\Delta^\Imc\\
%      \text{pairwise different}}}
%      \ominus\min_{i=1}^{n+1}\min\{r^\Imc(d,e_i),C^\Imc(e_i)\}
\end{align*}
Recall that
the usual duality between existential and value restrictions that appears in classical DLs 
does not hold in \mbox{\Go-\IALCQ}.

A \emph{classical interpretation} is defined similarly, with the set of truth
values restricted to~$0$ and~$1$. In this case, the semantics of a concept~$C$
is commonly viewed as a set $C^\Imc\subseteq\Delta^\Imc$ instead of the
characteristic function $C^\Imc\colon\Delta^\Imc\to\{0,1\}$.

%Note that the residual negation $\ominus x$ is $1$ if $x=0$, and $0$ whenever
%$x>0$. Thus, the semantics of at-most restrictions can be characterized as
%follows.
%\begin{proposition}
%\label{prop:at-most-semantics}
%  For all interpretations~\Imc, at-most restrictions $\atMost{n}{r}{C}$, and
%  elements $d\in\Delta^\Imc$, we have $(\atMost{n}{r}{C})^\Imc(d)\in\{0,1\}$.
%%
%  Furthermore, $(\atMost{n}{r}{C})^\Imc(d)=1$ iff there are at most $n$
%  elements $e\in\Delta^\Imc$ such that $\min\{r^\Imc(d,e),C^\Imc(e)\}>0$.
%\end{proposition}
%
In the following, we restrict all reasoning problems to so-called
\emph{witnessed} \Go-interpretations~\cite{Haje-FSS05}, which intuitively
require the suprema and infima in the semantics to be maxima and minima,
respectively.
More formally, the \mbox{\Go-interpretation}~\Imc is \emph{witnessed} if, for every
$d\in\Delta^\Imc$, $n\ge 0$, $r\in\NR$, and concept~$C$, there exist
$e,e_1,\dots,e_n\in\Delta^\Imc$ (where $e_1,\dots,e_n$ are pairwise
different) such that
\begin{align*}
%  (\exists r.C)^\Imc(d) &= \min\{r^\Imc(d,e),C^\Imc(e)\}, \\
  (\forall r.C)^\Imc(d) &= r^\Imc(d,e)\ra C^\Imc(e)\text{ and} \\
  (\atLeast{n}{r}{C})^\Imc(d) &= \min_{i=1}^n\min\{r^\Imc(d,e_i),C^\Imc(e_i)\}.
\end{align*}

The axioms of \Go-\IALCQ extend classical axioms by allowing to compare degrees
of arbitrary assertions in so-called \emph{ordered ABoxes}~\cite{BoDP-KR14},
and to state inclusions relationships between fuzzy concepts that hold to a
certain degree, instead of only~$1$.
A \emph{classical assertion} is an expression of the form $\assert{a}{C}$ or
$\assert{(a,b)}{r}$ for $a,b\in\NI$, $r\in\NR$, and a concept~$C$. An
\emph{order assertion} is of the form $\axiom{\alpha\bowtie q}$ or
$\axiom{\alpha\bowtie\beta}$ where ${\bowtie}\in\{<,\le,=,\ge,>\}$,
$\alpha,\beta$ are classical assertions, and $q\in[0,1]$.
A \emph{(fuzzy) general concept inclusion axiom (GCI)} is of the form
$\axiom{C\sqsubseteq D\ge q}$ for concepts $C,D$ and $q\in[0,1]$.
An \emph{ordered ABox} is a finite set of order assertions, a \emph{TBox} is a
finite set of GCIs, and an \emph{ontology} $\Omc=(\Amc,\Tmc)$ consists of an
ordered ABox~\Amc and a TBox~\Tmc.
A \Go-interpretation~\Imc \emph{satisfies} (or is a \emph{model} of) an order
assertion
$\axiom{\alpha\bowtie\beta}$ if $\alpha^\Imc\bowtie\beta^\Imc$ (where
$(\assert{a}{C})^\Imc:=C^\Imc(a^\Imc)$,
$(\assert{(a,b)}{r})^\Imc:=r^\Imc(a^\Imc,b^\Imc)$, and $q^\Imc:=q$);
it \emph{satisfies} a GCI $\axiom{C\sqsubseteq D\ge q}$ if
$C^\Imc(d)\ra D^\Imc(d)\ge q$ holds for all $d\in\Delta^\Imc$; and
it \emph{satisfies} an ordered ABox, TBox, or ontology if it satisfies all its
axioms.
An ontology is \emph{consistent} if it has a (witnessed) model.

Given an ontology~\Omc, we denote by $\rol(\Omc)$ the set of all role names
occurring in~\Omc and by $\sub(\Omc)$ the closure under negation of the set of
all subconcepts occurring in~\Omc. We consider the concepts $\lnot\lnot C$ and 
$C$ to be equal,
and thus the latter set is of quadratic size in the size of~\Omc.
Moreover, we denote by \values the closure under the involutive negation
$x\mapsto 1-x$ of the set of all truth degrees appearing in~\Omc, together
with~$0$, $0.5$, and~$1$. This set is of size linear on the size of~\Omc.
We sometimes denote the elements of $\values\subseteq[0,1]$ as
$0=q_0<q_1<\dots<q_{k-1}<q_k=1$.

We stress that we do not consider the general consistency problem in this
paper, but only a restricted version that uses only one individual name. 
An ordered ABox~\Amc is \emph{local} if it contains no role assertions
$\assert{(a,b)}{r}$ and there is a single individual name $a\in\NI$ such that
all order assertions in~\Amc only use~$a$.
The \emph{local consistency} problem, \ie deciding whether an ontology
$(\Amc,\Tmc)$ with a local ordered ABox~\Amc is consistent, can be seen as a
generalization of the classical concept satisfiability
problem~\cite{BoPe-JoDS13}.

Other common reasoning problems for FDLs, such as concept satisfiability and
subsumption can be reduced to local consistency~\cite{BoDP-KR14}: the
subsumption between $C$ and $D$ to degree~$q$ \wrt a TBox~\Tmc is equivalent to
the (local) inconsistency of $(\{\axiom{\assert{a}{C\to D}<q}\},\Tmc)$, and the
satisfiability of~$C$ to degree~$q$ w.r.t.~\Tmc is equivalent to the (local)
consistency of $(\{\axiom{\assert{a}{C}\ge q}\},\Tmc)$.

In the following section we show how to decide local consistency of a \Go-\IALCQ ontology through
a reduction to classical ontology consistency.

\section{Deciding Local Consistency}
\label{sec:local-consistency}

Let $\Omc=(\Amc,\Tmc)$ be a \Go-\IALCQ ontology where \Amc is a local ordered ABox
that uses only the individual name~$a$.
The main ideas behind the reduction to classical \ALCQ are that it suffices to
consider tree-shaped interpretations, where each domain element has a unique
role predecessor, and that we only have to consider the \emph{order} between
values of concepts, instead of their precise values.
This insight allows us to consider only finitely many different cases~\cite{BoDP-KR14}.

To compare the values of the elements of $\sub(\Omc)$ at different domain
elements, we  use the order structure
\[ \Umc :=
  \values\cup\sub(\Omc)\cup\sub_\uparrow(\Omc)\cup\{\lambda,\lnot\lambda\}, \]
where $\sub_\uparrow(\Omc):=\{\up{C} \mid C\in\sub(\Omc)\}$,
$\inv(\lambda):=\lnot\lambda$, $\inv(C):=\lnot C$, and
$\inv(\up{C}):=\up{\lnot C}$, for all concepts $C\in\sub(\Omc)$.
The idea is that total preorders over~\Umc describe the relationships between
the values of $\sub(\Omc)$ and~\values at a single domain element. The elements
of $\sub_\uparrow(\Omc)$ allow us to additionally refer to the relevant values
at the unique role predecessor of the current domain element (in a tree-shaped
interpretation). The value~$\lambda$ represents the value of the role
connection from this predecessor.
For convenience, we define $\up{q}:=q$ for all $q\in\values$.

In order to describe such total preorders in a classical \ALCQ ontology, we
employ special concept names of the form $\oc{\alpha\le\beta}$ for
$\alpha,\beta\in\Umc$.
This differs from previous reductions for finitely
valued FDLs~\cite{Stra-JELIA04,BoSt-ISC11,BDGS-IJUF12} in that we not only
consider \emph{cut-concepts} of the form $\oc{q\le\alpha}$ with $q\in\values$,
but also relationships between different concepts.%
\footnote{For the rest of this paper, expressions of the form \oc{\alpha\le\beta}
denote (classical) concept names.}
For convenience, we introduce the abbreviations
$\oc{\alpha\ge\beta}:=\oc{\beta\le\alpha}$,
$\oc{\alpha<\beta}:=\lnot\oc{\alpha\ge\beta}$, and similarly for~$=$ and $>$.
Furthermore, we define the complex expressions
\begin{itemize}
  \item $\oc{\alpha\ge\min\{\beta,\gamma\}} :=
    \oc{\alpha\ge\beta}\sqcup\oc{\alpha\ge\gamma}$,
  \item $\oc{\alpha\le\min\{\beta,\gamma\}} :=
    \oc{\alpha\le\beta}\sqcap\oc{\alpha\le\gamma}$,
  \item $\oc{\alpha\ge\beta\ra\gamma} :=
    (\oc{\beta\le\gamma}\to\oc{\alpha\ge 1})\sqcap
    (\oc{\beta>\gamma}\to\oc{\alpha\ge\gamma})$,
  \item $\oc{\alpha\le\beta\ra\gamma} :=
    \oc{\beta\le\gamma}\sqcup\oc{\alpha\le\gamma}$,
\end{itemize}
and extend these notions to the expressions $\oc{\alpha\bowtie\beta\ra\gamma}$ etc., for
${\bowtie}\in\{<,=,>\}$, analogously.

For each concept $C\in\sub(\Omc)$, we now define the classical \ALCQ TBox
$\red(C)$, depending on the form of $C$, as follows.
\begin{align*}
  \red(\top) :={}
    & \{\top\sqsubseteq\oc{\top\ge 1}\} \\
  \red(\lnot C) := {}
    & \emptyset \\
  \red(C\sqcap D) :={}
    & \{\top\sqsubseteq\oc{C\sqcap D=\min\{C,D\}}\} \\
  \red(C\to D) :={}
    & \{\top\sqsubseteq\oc{C\to D=C\ra D}\} \displaybreak[0] \\
%  \red(\exists r.C) :={}
%    & \{\top\sqsubseteq
%      \exists r.\oc{\up{\exists r.C}\le\min\{\lambda,C\}}\sqcap
%      \forall r.\oc{\up{\exists r.C}\ge\min\{\lambda,C\}}\} \\
  \red(\forall r.C) :={}
    & \{\top\sqsubseteq
      \exists r.\oc{\up{\forall r.C}\ge\lambda\ra C}\sqcap {} \\
    & \phantom{\{\top\sqsubseteq{}}
      \forall r.\oc{\up{\forall r.C}\le\lambda\ra C}\} \displaybreak[0] \\
  \red(\atLeast{n}{r}{C}) :={}
    & \{\top\sqsubseteq
      \atLeast{n}{r}{\oc{\up{\atLeast{n}{r}{C}}\le\min\{\lambda,C\}}}\sqcap{} \\
    & \phantom{\{\top\sqsubseteq{}}
      \lnot\atLeast{n}{r}{\oc{\up{\atLeast{n}{r}{C}}<\min\{\lambda,C\}}}\}
%  \red(\atMost{n}{r}{C}) :={}
%    & \{\top\sqsubseteq\oc{1\le(\atMost{n}{r}{C})}\sqcup
%      \oc{(\atMost{n}{r}{C})\le 0}, \\
%    & \phantom{\{}
%      
%\oc{1\le(\atMost{n}{r}{C})}\equiv\atMost{n}{r}{\oc{0<\min\{\lambda,C\}}}\}
\end{align*}
Intuitively, $\red(C)$ describes the semantics of~$C$ in terms of its order
relationships to other elements of~\Umc.
Note that the semantics of the involutive negation $\lnot C=\inv(C)$ is already
handled by the operator~\inv (see also the last line of the definition of $\red(\Umc)$ below).

The reduced classical \ALCQ ontology $\red(\Omc)$ is defined as follows:
\begin{align*}
  \red(\Omc) :={} 
    & (\red(\Amc),\red(\Umc)\cup\red(\uparrow)\cup\red(\Tmc)), \\
  \red(\Amc) :={}
    & \{\assert{a}{\oc{C\bowtie q}}\mid\axiom{\assert{a}{C}\bowtie 
    q}\in\Amc\}\cup {} \\
    & \{\assert{a}{\oc{C\bowtie D}}\mid
        \axiom{\assert{a}{C}\bowtie\assert{a}{D}}\in\Amc\}, \displaybreak[0] \\
  \red(\Umc) :={}
    & \{\oc{\alpha\le\beta}\sqcap\oc{\beta\le\gamma}\sqsubseteq
        \oc{\alpha\le\gamma} \mid
      \alpha,\beta,\gamma\in\Umc\} \cup{} \\
    & \{\top\sqsubseteq\oc{\alpha\le\beta}\sqcup\oc{\beta\le\alpha} \mid
      \alpha,\beta\in\Umc\} \cup{} \\
    & \{\top\sqsubseteq\oc{0\le\alpha}\sqcap\oc{\alpha\le 1} \mid
      \alpha\in\Umc\} \cup{} \\
    & \{\top\sqsubseteq\oc{\alpha\bowtie\beta} \mid
      \alpha,\beta\in\values,\ \alpha\bowtie\beta\} \cup{} \\
    & \{\oc{\alpha\le\beta}\sqsubseteq\oc{\inv(\beta)\le\inv(\alpha)} \mid
      \alpha,\beta\in\Umc\}, \displaybreak[0] \\
  \red(\uparrow) := {}
    & \{\oc{\alpha\bowtie\beta}\sqsubseteq
      \forall r.\oc{\up{\alpha}\bowtie\up{\beta}} \mid{} \\
    &\quad\quad\quad\quad
      \alpha,\beta\in\values\cup\sub(\Omc),\ r\in\rol(\Omc)\}, \\
  \red(\Tmc) :={}
    & \{\top\sqsubseteq\oc{q\le C\ra D} \mid
      \axiom{C\sqsubseteq D\ge q}\in\Tmc\} \cup {} \\
    & \bigcup_{C\in\sub(\Omc)}\red(C).
\end{align*}
We briefly explain this construction.
The reductions of the order assertions and fuzzy GCIs in~\Omc are
straightforward; the former expresses that the individual $a$ must belong to the corresponding
order concept $C\bowtie q$ or $C\bowtie D$, while the latter expresses that 
every element of the domain must satisfy 
the restriction provided by the fuzzy GCI. The axioms of $\red(\Umc)$ intuitively ensure that the relation ``$\le$''
forms a total preorder that is compatible with all the values in~\values, and that
$\inv$ is an antitone operator.
Finally, the TBox $\red(\uparrow)$ expresses a connection between the orders of a domain
element and those of its role successors.

%\todo[inline]{I don't know whether an example would make sense here?}

The following lemmata show that this reduction is correct; i.e., that it preserves local consistency.

\begin{lemma}
\label{lem:soundness}
  If $\red(\Omc)$ has a classical model, then \Omc has a \Go-model.
\end{lemma}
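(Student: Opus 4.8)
The plan is to transform a classical model~$\Jmc$ of $\red(\Omc)$ into a witnessed \Go-model of~\Omc. By the forest model property of classical \ALCQ~\cite{Kaza-JELIA04} I may assume that $\Jmc$ is tree-shaped, with $a^\Jmc$ as root and every non-root element having a unique predecessor, reached by a single role. At each $d\in\Delta^\Jmc$ the concept names read off a relation $\ls_d$ on~\Umc, namely $\alpha\ls_d\beta$ iff $d\in\oc{\alpha\le\beta}^\Jmc$. The axioms of $\red(\Umc)$ guarantee that each $\ls_d$ is a total preorder in which $0$ and~$1$ are least and greatest, which agrees with the fixed order of~\values, and for which $\inv$ is antitone.

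The heart of the argument is to replace the soft order information $\ls_d$ by concrete truth degrees. First I would prove a combinatorial extension claim: given $\ls_d$ and an order-preserving, $\inv$-compatible partial map already fixed on some $\inv$-closed subset containing~\values, it extends to a map $g_d\colon\Umc\to[0,1]$ with $g_d(\alpha)\le g_d(\beta)$ iff $\alpha\ls_d\beta$, with $g_d(\inv(\alpha))=1-g_d(\alpha)$, and with $g_d(q)=q$ for all $q\in\values$; the proof interleaves the remaining $\equiv$-classes into the open gaps between the already assigned values, placing them symmetrically about~$0.5$ so as to respect~$\inv$. I then fix the maps $g_d$ top-down along the tree. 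At the root I apply the claim with the partial map defined only on~\values. For a child~$e$ of~$d$ reached by role~$r$, the axioms $\red(\uparrow)$ propagate every comparison $\alpha\bowtie\beta$ with $\alpha,\beta\in\values\cup\sub(\Omc)$ from~$d$ to the comparison $\up{\alpha}\bowtie\up{\beta}$ at~$e$; since totality selects exactly one of $<,=,>$ at~$d$, the order of $\{\up{C}\}\cup\values$ under $\ls_e$ coincides with that of $\{C\}\cup\values$ under $\ls_d$. Hence $g_e(\up{C}):=g_d(C)$ is a consistent $\inv$-compatible partial map on an $\inv$-closed set, and the claim extends it to all of~\Umc.

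I then set $\Delta^\Imc:=\Delta^\Jmc$, $a^\Imc:=a^\Jmc$, $A^\Imc(d):=g_d(A)$ for concept names, and $r^\Imc(d,e):=g_e(\lambda)$ whenever $e$ is the $r$-child of~$d$ (and~$0$ otherwise). The central claim is $C^\Imc(d)=g_d(C)$ for all $C\in\sub(\Omc)$ and all~$d$, shown by induction on~$C$: the base case is the definition, negation uses $g_d(\inv(C))=1-g_d(C)$, and $C\sqcap D$ and $C\to D$ follow from the equality axioms in $\red(C\sqcap D)$ and $\red(C\to D)$ since $g_d$ respects $\min$ and the residuum on values. For $\forall r.C$ and $\atLeast{n}{r}{C}$ I combine $g_e(\up{\forall r.C})=g_d(\forall r.C)$ (and its analogue for the number restriction), $r^\Imc(d,e)=g_e(\lambda)$, and the induction hypothesis $C^\Imc(e)=g_e(C)$: one conjunct of $\red(\forall r.C)$ (resp.\ $\red(\atLeast{n}{r}{C})$) forces the relevant infimum or supremum to lie on the correct side of $g_d(\forall r.C)$ (resp.\ $g_d(\atLeast{n}{r}{C})$), while the other conjunct provides a successor tuple attaining this bound. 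This proves the equality and that the infimum/supremum is a minimum/maximum, so~$\Imc$ is witnessed. Finally $\red(\Amc)$ makes $a^\Imc$ satisfy every order assertion, and unfolding the axioms $\top\sqsubseteq\oc{q\le C\ra D}$ of $\red(\Tmc)$ gives $C^\Imc(d)\ra D^\Imc(d)\ge q$ at every~$d$, so all GCIs hold.

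I expect the main obstacle to be the globally consistent choice of the maps~$g_d$: I must convert the per-element preorders into reals while simultaneously respecting the order at each node, the involutive negation, and the exact parent--child matching $g_e(\up{C})=g_d(C)$, all without destroying the witnessing needed for the number restrictions. What makes this feasible is the interplay of the tree shape, which gives unique predecessors, with the \emph{exact} order propagation of $\red(\uparrow)$; orchestrating these two facts, rather than any individual computation, is the delicate part.
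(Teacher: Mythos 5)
Your proposal is correct and follows essentially the same route as the paper's proof: the maps $g_d$ are exactly the paper's auxiliary function $v(\cdot,u)$ satisfying its properties (P1)--(P4), constructed top-down by interpolating the remaining $\equiv_d$-classes into the gaps between already-assigned values (the inclusion of $0.5$ in \values is what makes the $\inv$-symmetric placement work), with well-definedness of $g_e(\up{C}):=g_d(C)$ secured by $\red(\uparrow)$, followed by the same structural induction establishing $C^{\Imc}(d)=g_d(C)$ and the same witness extraction from the two conjuncts of $\red(\forall r.C)$ and $\red(\atLeast{n}{r}{C})$. No substantive differences to report.
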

\begin{proof}
  By~\cite{Kaza-JELIA04}, $\red(\Omc)$ must have a \emph{tree model}~\Imc, \ie
  we can assume that $\Delta^\Imc$ is a prefix-closed subset of $\naturals^*$,
  $a^\Imc=\eps$, for all $n_1,\dots,n_k\in\naturals$, $k\ge 1$, with
  $u:=n_1\dots n_k\in\Delta^\Imc$, the element
  $\parent{u}:=n_1\dots n_{k-1}\in\Delta^\Imc$ is an $r$-predecessor of~$u$
  for some $r\in\rol(\Omc)$, and there are no other role connections.
  For any $u\in\Delta^{\Imc}$, we denote by $\ls_u$ the corresponding total
  preorder on~\Umc, that is, we define $\alpha\ls_u\beta$ iff
  $u\in\oc{\alpha\le\beta}^\Imc$, and by $\equiv_u$ the induced equivalence
  relation.
%!TEX encoding = UTF-8 Unicode

  As a first step in the construction of a \Go-model of \Omc, we define the
  auxiliary function $v\colon\Umc\times\Delta^\Imc\to[0,1]$ that satisfies the
  following conditions for all $u\in\Delta^\Imc$:
  \begin{enumerate}[label=(P\arabic*),leftmargin=*]
    \item\label{p1} for all $q\in\values$, we have $v(q,u)=q$,
    \item\label{p2} for all $\alpha,\beta\in\Umc$, we have
      $v(\alpha,u)\le v(\beta,u)$ iff $\alpha\ls_u\beta$,
    \item\label{p3} for all $\alpha\in\Umc$, we have
      $v(\inv(\alpha),u)=1-v(\alpha,u)$,
    \item\label{p4} if $u\neq\eps$, then for all $C\in\sub(\Omc)$ it holds that 
      $v(C,\parent{u})=v(\up{C},u)$.
  \end{enumerate}
  We define~$v$ by induction on the structure of $\Delta^\Imc$ starting 
  with~$\eps$. Let $\Umc/{\equiv_\eps}$ be the set of all equivalence
  classes of~$\equiv_\eps$. Then $\ls_\eps$ yields a total order $\le_\eps$
  on $\Umc/{\equiv_\eps}$. Since \Imc satisfies $\red(\Umc)$, we have
  $$[0]_\eps<_\eps[q_1]_\eps<_\eps\dots<_\eps[q_{k-1}]_\eps<_\eps[1]_\eps$$ \wrt
  this order. For every $[\alpha]_\epsilon\in\Umc/{\equiv_\eps}$, we now set
  $\inv([\alpha]_\eps):=[\inv(\alpha)]_\eps$. This function is well-defined by
  the axioms in~$\red(\Umc)$.
  On all $\alpha\in[q]_\eps$ for $q\in\values$, we now define
  $v(\alpha,\eps):=q$, which ensures that \ref{p1} holds.
  For the equivalence classes that do not contain a value from~\values, note
  that by~$\red(\Umc)$, every such class must be strictly between~$[q_i]_\eps$
  and~$[q_{i+1}]_\eps$ for $q_i,q_{i+1}\in\values$. We denote the $n_i$
  equivalence classes between $[q_i]_\eps$ and $[q_{i+1}]_\eps$ as follows:
  \[ [q_i]_\eps<_\eps E_1^i<_\eps\dots<_\eps E_{n_i}^i<_\eps[q_{i+1}]_\eps. \]
  For every $\alpha\in E_j^i$, we set
  $v(\alpha,\eps):=q_i+\tfrac{j}{n_i+1}(q_{i+1}-q_i)$, which ensures that
  \ref{p2} is also satisfied.
  Furthermore, observe that $1-q_{i+1}$ and $1-q_i$ are also adjacent 
  in~\values and we have
  \[[1-q_{i+1}]_\eps<_\eps\inv(E_{n_i}^i)<_\eps\dots<_\eps
    \inv(E_1^i)<_\eps[1-q_i]_\eps\]
  by the axioms in $\red(\Umc)$.
  Hence, it follows from the definition of~$v(\alpha,\eps)$ that \ref{p3}
  holds.

  Let now $u\in\Delta^\Imc$ be such that the function $v$, satisfying the properties 
  \ref{p1}--\ref{p4}, has already been
  defined for $\parent{u}$. Since \Imc is a tree model, there must be an $r\in\NR$ such
  that $(\parent{u},u)\in r^\Imc$.
  We again consider the set of equivalence classes $\Umc/{\equiv_u}$ and set
  $v(\alpha,u):=q$ for all $q\in\values$ and $\alpha\in[q]_u$, and
  $v(\alpha,u):=v(C,\parent{u})$ for all $C\in\sub(\Omc)$
  and $\alpha\in[\up{C}]_u$.
  To see that this is well-defined, consider the case that
  $[\up{C}]_u=[\up{D}]_u$, \ie $u\in\oc{\up{C}=\up{D}}^\Imc$. From the axioms 
  in
  $\red(\uparrow)$ and the fact that $(\parent{u},u)\in r^\Imc$, it follows
  that $\parent{u}\in\oc{C=D}^\Imc$, and thus 
  $[C]_{\parent{u}}=[D]_{\parent{u}}$.
  Since~\ref{p2} is satisfied for $\parent{u}$, we get
  $v(C,\parent{u})=v(D,\parent{u})$.
  The same argument shows that $[q]_u=[\up{q}]_u=[\up{C}]_u$ implies
  $v(q,\parent{u})=v(C,\parent{u})$.
  For the remaining equivalence classes, we can use a construction analogous to
  the case for~$\eps$ by considering the two unique neighboring equivalence
  classes that contain an element of $\values\cup\sub(\Omc)$ (for which
  $v$ has already been defined). This construction ensures that
  \ref{p1}--\ref{p4} hold for~$u$.

  Based on the function~$v$, we define the \Go-interpretation~$\Imc_\fuzzy$ over the domain
  $\Delta^{\Imc_\fuzzy}:=\Delta^\Imc$, where $a^{\Imc_\fuzzy}:=a^\Imc=\eps$;
  \begin{align*}
    A^{\Imc_\fuzzy}(u) &:= \begin{cases}
      v(A,u) &\text{if $A\in\sub(\Omc)$,} \\
      0 &\text{otherwise; \quad and}
    \end{cases} \\
    r^{\Imc_\fuzzy}(u,w) &:= \begin{cases}
      v(\lambda,w) &\text{if $(u,w)\in r^\Imc$,} \\
      0 &\text{otherwise.}
    \end{cases}
  \end{align*}
  We show by induction on the structure of~$C$ that
  \begin{equation}\label{If-correct}
    C^{\Imc_\fuzzy}(u)=v(C,u)
      \text{ for all }C\in\sub(\Omc)\text{ and }u\in\Delta^\Imc.
  \end{equation}
  For concept names, this holds by the definition of~$\Imc_\fuzzy$.
  For~$\top$, we know that $\top^{\Imc_\fuzzy}(u)=1=v(\top,u)$ by the
  definition of $\red(\top)$ and~\ref{p2}.
  For $\lnot C$, we have
  \[ (\lnot C)^{\Imc_\fuzzy}(u)=1-C^{\Imc_\fuzzy}(u)=1-v(C,u)=v(\lnot C,u) \]
  by the induction hypothesis and~\ref{p3}.
  For conjunctions $C\sqcap D$, we know that
  \begin{align*} 
  (C\sqcap D)^{\Imc_\fuzzy}(u) &
    {} = \min\{C^{\Imc_\fuzzy}(u),D^{\Imc_\fuzzy}(u)\} \\ &
    {} = \min\{v(C,u),v(D,u)\} \\ &
    {} = v(C\sqcap D,u) 
  \end{align*}
  by the definition of $\red(C\sqcap D)$ and~\ref{p2}. Implications can be 
  treated similarly.

  Consider a value restriction $\forall r.C\in\sub(\Omc)$.
  For every $w\in\Delta^\Imc$ with $(u,w)\in r^\Imc$, we have
  $w\in\oc{\up{\forall r.C}\le\lambda\ra C}^\Imc$ since \Imc satisfies
  $\red(\forall r.C)$. By the induction hypothesis, the fact that
  $\parent{w}=u$, \ref{p2}, and~\ref{p4}, this implies that
  $v(\forall r.C,u)
    \le v(\lambda,w)\ra v(C,w)
    = r^{\Imc_\fuzzy}(u,w)\ra C^{\Imc_\fuzzy}(w)$,
  and thus
  \begin{align*} (\forall r.C)^{\Imc_\fuzzy}(u)
    & {} = \inf\limits_{\mathclap{\phantom{======} w\in\Delta^{\Imc},\ (u,w)\in r^\Imc}} \quad
      r^{\Imc_\fuzzy}(u,w)\ra C^{\Imc_\fuzzy}(w) \\
    & {} \ge v(\forall r.C,u).
  \end{align*}
  Furthermore, by the existential restriction introduced in $\red(\forall r.C)$, we
  know that there exists a $w_0\in\Delta^\Imc$ such that $(u,w_0)\in r^\Imc$ 
  and
  $w_0\in\oc{\up{\forall r.C}\ge\lambda\ra C}^\Imc$.
  By the same arguments as above, we get
  \begin{align*}
  v(\forall r.C,u)
    {} & \ge r^{\Imc_\fuzzy}(u,w_0)\ra C^{\Imc_\fuzzy}(w_0) \\
    {} & \ge (\forall r.C)^{\Imc_\fuzzy}(u),
  \end{align*}
  which concludes the proof of~\eqref{If-correct} for~$\forall r.C$. As a
  by-product, we have found in the element~$w_0$ the witness required for satisfying the concept $\forall r.C$
  at~$u$.

  Consider now $\atLeast{n}{r}{C}\in\sub(\Omc)$.
  For any $n$\mbox{-}tuple $(w_1,\dots,w_n)$ of different domain elements with
  $(u,w_1),\dots,(u,w_n)\in r^\Imc$, by $\red(\atLeast{n}{r}{C})$
  there must be an index~$i$, $1\le i\le n$, such that
  $w_i\notin\oc{\up{\atLeast{n}{r}{C}}<\min\{\lambda,C\}}^\Imc$.
  Using arguments similar to those introduced above, we obtain that
  \begin{align*} 
  v(\atLeast{n}{r}{C},u) &
    {} \ge \min\{r^{\Imc_\fuzzy}(u,w_i),C^{\Imc_\fuzzy}(w_i) \\ &
    {} \ge \min_{j=1}^n\min\{r^{\Imc_\fuzzy}(u,w_j),C^{\Imc_\fuzzy}(w_j)\}. 
   \end{align*}
  On the other hand, we know that there are $n$ different elements
  $w_1^0,\dots,w_n^0\in\Delta^\Imc$ such that $(u,w_j^0)\in r^\Imc$ and
  $w_j\in\oc{\up{\atLeast{n}{r}{C}}\le\min\{\lambda,C\}}^\Imc$ for all~$j$,
  $1\le j\le n$. As in the case of~$\forall r.C$ above, we conclude that
  \begin{align*} 
  v(\atLeast{n}{r}{C},u) &
    {} \le \min_{j=1}^n\min\{r^{\Imc_\fuzzy}(u,w_j^0),C^{\Imc_\fuzzy}(w_j^0)\} \\ &
    {} \le (\atLeast{n}{r}{C})^{\Imc_\fuzzy}(u)
    \le v(\atLeast{n}{r}{C},u), 
  \end{align*}
  as required. Furthermore, $w_1^0,\dots,w_n^0$ are the required
  witnesses for $\atLeast{n}{r}{C}$ at~$u$.
%
%  Finally, we consider the case of at-most restrictions
%  $\atMost{n}{r}{C}\in\sub(\Omc)$.
%%
%  Since \Imc satisfies $\red(\atMost{n}{r}{C})$, we know that
%  $v(\atMost{n}{r}{C},u)\in\{0,1\}$ by~\ref{p1} and~\ref{p2}.
%%
%  Furthermore, this value is equal to~$1$ iff there are at most $n$ elements
%  $w\in\Delta^\Imc$ with $(u,w)\in r^\Imc$ and
%  $\min\{r^{\Imc_\fuzzy}(u,w),C^{\Imc_\fuzzy}(w)\}>0$. By
%  Proposition~\ref{prop:at-most-semantics}, this is exactly the semantics of
%  $(\atMost{n}{r}{C})^{\Imc_\fuzzy}(u)$.
%
  This concludes the proof of~\eqref{If-correct}.

  It remains to be shown that $\Imc_\fuzzy$ is a model of~\Omc. For every
  $\axiom{\assert{a}{C}\bowtie q}\in\Amc$, we have
  $a^\Imc=\eps\in[C\bowtie q]^\Imc$, and thus
  $C^{\Imc_\fuzzy}(a^{\Imc_\fuzzy})=v(C,\eps)\bowtie v(q,\eps)=q$
  by~\eqref{If-correct}, \ref{p1}, and~\ref{p2}. A similar argument works for handling
  order assertions of the form $\axiom{\assert{a}{C}\bowtie\assert{a}{D}}$.
  To conclude, consider an arbitrary GCI $\axiom{C\sqsubseteq D\ge q}\in\Tmc$ and
  $u\in\Delta^\Imc$. By the definition of $\red(\Tmc)$ and~\ref{p1}, we have
  $v(q,u)\le v(C,u)\ra v(D,u)$. Thus, \eqref{If-correct} and~\ref{p2} yield
  $C^{\Imc_\fuzzy}(u)\ra D^{\Imc_\fuzzy}(u)\ge q$. Thus, $\Imc_\fuzzy$ satisfies all the axioms 
  in \Omc, which concludes the proof.
%  \qed
\end{proof}
For the converse direction, we now show that it is possible to unravel every \Go-model of~\Omc
into a classical tree model of $\red(\Omc)$.

\begin{lemma}
\label{lem:completeness}
  If \Omc has a \Go-model, then $\red(\Omc)$ has a classical model.
\end{lemma}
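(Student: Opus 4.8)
The plan is to invert the construction of Lemma~\ref{lem:soundness}: from a witnessed \Go-model~\Imc of~\Omc I would \emph{unravel} into a tree and then read off a classical interpretation~\Jmc by recording, at every node, the \emph{actual} truth degrees of all elements of~\Umc and declaring the order concept $\oc{\alpha\le\beta}$ to hold precisely when the degree of~$\alpha$ does not exceed that of~$\beta$. Since $\up{C}$ and~$\lambda$ refer to the \emph{unique} role predecessor and to the degree of the edge leading to it, the first step is to pass to a tree: I take $\Delta^\Jmc$ to be the set of all paths $\pi=d_0 r_1 d_1\cdots r_k d_k$ with $d_0=a^\Imc$, $r_i\in\rol(\Omc)$, and $d_i\in\Delta^\Imc$, writing $\tail(\pi):=d_k$ and letting $\parent{\pi}$ be the path obtained by dropping $r_k d_k$. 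Crucially, I keep \emph{all} successors, including those reached by an edge of degree~$0$, so that every node has an $r$-successor for each $r\in\rol(\Omc)$; this is what lets the existential conjunct of $\red(\forall r.C)$ always be satisfied. The resulting model may be infinite, which is harmless since we only need a classical model to exist.

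On this tree I would define $v\colon\Umc\times\Delta^\Jmc\to[0,1]$, mirroring properties~\ref{p1}--\ref{p4} but now taken directly from~\Imc: set $v(q,\pi):=q$ for $q\in\values$, $v(C,\pi):=C^\Imc(\tail(\pi))$ for $C\in\sub(\Omc)$, $v(\inv(\alpha),\pi):=1-v(\alpha,\pi)$, and, for $\pi\neq\eps$, $v(\up{C},\pi):=C^\Imc(\tail(\parent{\pi}))$ and $v(\lambda,\pi):=r_k^\Imc(\tail(\parent{\pi}),\tail(\pi))$. At the root there is no predecessor, but the only axioms mentioning $\up{C}$ or~$\lambda$ do so inside a role quantifier, so these values are unconstrained at~\eps and may be fixed by any assignment into $[0,1]$ that respects~\inv (e.g.\ $v(\up{C},\eps):=C^\Imc(a^\Imc)$). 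The classical interpretation then has $a^\Jmc:=\eps$, all tree edges as role edges, and $\pi\in\oc{\alpha\le\beta}^\Jmc$ iff $v(\alpha,\pi)\le v(\beta,\pi)$. In contrast to Lemma~\ref{lem:soundness}, no induction relating~$v$ to concept values is needed here, because the degrees are read off from~\Imc by definition.

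It then remains to verify that~\Jmc satisfies every axiom of $\red(\Omc)$. The axioms of $\red(\Umc)$ hold because $v(\cdot,\pi)$ maps into the totally ordered set $[0,1]$, is the identity on~\values, and inverts under~\inv; those of $\red(\uparrow)$ hold because $v(\up{\alpha},\pi)=v(\alpha,\parent{\pi})$ for $\alpha\in\values\cup\sub(\Omc)$, so every order between $\alpha$ and~$\beta$ at a node is transferred to $\up{\alpha}$ and~$\up{\beta}$ at its successors; and $\red(\top)$, $\red(C\sqcap D)$, $\red(C\to D)$, the GCI reductions, and $\red(\Amc)$ follow directly from the \Go-semantics (minimum, residuum), from~\Imc satisfying the GCIs, and from $v(C,\eps)=C^\Imc(a^\Imc)$ matching the order assertions of~\Amc. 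The main obstacle, exactly as in Lemma~\ref{lem:soundness}, is the treatment of $\red(\forall r.C)$ and $\red(\atLeast{n}{r}{C})$. For the universal conjunct of $\red(\forall r.C)$ and the negative conjunct $\lnot\atLeast{n}{r}{(\ldots)}$ of $\red(\atLeast{n}{r}{C})$ I appeal to the fact that the infimum, respectively supremum, in the \Go-semantics bounds \emph{every} successor, respectively \emph{every} $n$-tuple of pairwise different successors, so none of them can violate the required inequality; here it is essential that the successors of~$\pi$ are in bijection with~$\Delta^\Imc$, so that these bounds are tight. For the existential conjunct and the positive at-least conjunct I invoke the \emph{witnessedness} of~\Imc: the degree of $\forall r.C$, respectively $\atLeast{n}{r}{C}$, at $\tail(\pi)$ is attained by some element, respectively by $n$ pairwise different elements, whose copies occur among the successors of~$\pi$ by construction and provide the required witnesses. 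I expect the only genuinely delicate points to be retaining the degree-$0$ witness edges so that value restrictions always have a successor, and checking that the abbreviations $\oc{\alpha\bowtie\beta\ra\gamma}$ and $\oc{\alpha\bowtie\min\{\beta,\gamma\}}$ evaluate under~$v$ to the intended residuum and minimum; the remaining verifications are routine.
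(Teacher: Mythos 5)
Your proposal is correct and follows essentially the same route as the paper's proof: unravel the witnessed \Go-model into a tree of paths (retaining all successors, including degree-$0$ edges), read off the actual truth degrees of all elements of~\Umc at each node, interpret $\oc{\alpha\le\beta}$ by comparing these degrees, and verify the quantifier axioms using witnessedness for the existential/positive conjuncts and the inf/sup bounds for the universal/negative ones. The only differences are notational (your paths carry an explicit initial element $d_0=a^\Imc$, the paper uses $\tail(\eps)=a^\Imc$), so no further comparison is needed.
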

\begin{proof}
%I think this is not needed anymore:
%  By the Löwenheim-Skolem-theorem, there must be a \Go-model~\Imc of~\Omc that
%  has a countable domain, and hence we can assume that
%  $\Delta^\Imc\subseteq\naturals$.
%
  Given a \Go-model~\Imc of~\Omc, we define a classical
  interpretation~$\Imc_\crisp$ over the domain $\Delta^{\Imc_\crisp}$ of all
  paths of the form $\path=r_1d_1\dots r_md_m$ with $r_i\in\NR$,
  $d_i\in\Delta^\Imc$, $m\ge 0$.
  We set $a^{\Imc_\crisp}:=\eps$ and
  \[ r^{\Imc_\crisp}:=\{(\path,\path rd) \mid
    \path\in\Delta^{\Imc_\crisp},\ d\in\Delta^\Imc\} \]
  for all $r\in\NR$.
  We denote by $\tail(r_1d_1\dots r_md_m)$ the element $d_m$ if $m>0$, and
  $a^\Imc$ if $m=0$. Similarly, we set $\prev(r_1d_1\dots r_md_m)$ to $d_{m-1}$
  if $m>1$, and to $a^\Imc$ if $m=1$. Finally, $\role(r_1d_1\dots r_md_m)$
  denotes $r_m$ whenever $m>0$.
  For any $\alpha\in\Umc$ and $\path\in\Delta^{\Imc_\crisp}$, we define $\alpha^\Imc(\path)$ as
  \begin{align*}
    C^\Imc(\tail(\path)) &\text{ if $\alpha=C\in\sub(\Omc)$;} \\
    C^\Imc(\prev(\path)) &\text{ if $\alpha=\up{C}, C\in\sub(\Omc)$;} \\
    q &\text{ if $\alpha=q\in\values$;} \\
    \role(\path)^\Imc(\prev(\path),\tail(\path)) &\text{\ if $\alpha=\lambda$;}\\
    1-\role(\path)^\Imc(\prev(\path),\tail(\path))
      &\text{\ if $\alpha=\lnot\lambda$.}
  \end{align*} 
  Note that for $\path=\eps$ this expression is only defined for
  $\alpha\in\values\cup\sub(\Omc)$. We fix the value of $\alpha^\Imc(\eps)$ for
  all other~$\alpha$ arbitrarily, in such a way that for all
  $\alpha,\beta\in\Umc$ we have $\alpha^\Imc(\eps)\le\beta^\Imc(\eps)$ iff
  $\inv(\beta)^\Imc(\eps)\le\inv(\alpha)^\Imc(\eps)$.
  We can now define the interpretation of all concept names 
  $\oc{\alpha\le\beta}$
  with $\alpha,\beta\in\Umc$ as
  \[ \oc{\alpha\le\beta}^{\Imc_\crisp}:=
    \{\path \mid \alpha^\Imc(\path)\le\beta^\Imc(\path)\}. \]
  It is easy to see that we have
  $\path\in\oc{\alpha\bowtie\beta}^{\Imc_\crisp}$ iff
  $\alpha^\Imc(\path)\bowtie\beta^\Imc(\path)$ also for all other order
  expressions~$\bowtie$, and that $\Imc_\crisp$ satisfies $\red(\Umc)$.
  We now show that $\Imc_\crisp$ satisfies the remaining parts of $\red(\Omc)$.

  For any order assertion $\axiom{\assert{a}{C}\bowtie\assert{a}{D}}\in\Amc$ we
  have $C^\Imc(a^\Imc)\bowtie D^\Imc(a^\Imc)$. This implies that
  $C^\Imc(\eps)\bowtie D^\Imc(\eps)$, and thus
  $a^\Imc=\eps\in\oc{C\bowtie D}^{\Imc_\crisp}$, as required. A similar argument
  works for assertions of the form $\axiom{\assert{a}{C}\bowtie q}$.
  Consider now a GCI $\axiom{C\sqsubseteq D\ge q}\in\Tmc$ and any
  $\path\in\Delta^{\Imc_\crisp}$. We know that
  $C^\Imc(\tail(\path))\ra D^\Imc(\tail(\path))\ge q$, and thus
  $\path\in\oc{q\le C\ra D}^{\Imc_\crisp}$.

  For $\red(\uparrow)$, consider any $\alpha,\beta\in\values\cup\sub(\Omc)$,
  $r\in\rol(\Omc)$, and $\path\in\oc{\alpha\bowtie\beta}^{\Imc_\crisp}$. Thus, 
  it
  holds that $\alpha^\Imc(\path)\bowtie\beta^\Imc(\path)$. Every $r$-successor
  of~$\path$ in~$\Imc_\crisp$ must be of the form $\path rd$. Since
  $\up{\alpha}^\Imc(\path rd)=\alpha^\Imc(\path)\bowtie\beta^\Imc(\path)=
    \up{\beta}^\Imc(\path rd)$, we know that all $r$-successors of~$\path$
  satisfy $\oc{\up{\alpha}\bowtie\up{\beta}}$.

  It remains to be shown that $\Imc_\crisp$ satisfies $\red(C)$ for all concepts
  $C\in\sub(\Omc)$. 
    For $C=\top$, the claim follows from the fact that
      $\top^\Imc(\path)=\top^\Imc(\tail(\path))=1$.
    For $\lnot C$, the result is trivial, and for conjunctions and
    implications, it follows from the semantics of $\sqcap$ and $\to$ and the
    properties of $\min$ and $\ra$, respectively.
      
    Consider the case of $\forall r.C$ and an arbitrary domain element
      $\path\in\Delta^{\Imc_\crisp}$, and set $d:=\tail(\path)$. Since \Imc is
      witnessed, there must be an $e\in\Delta^\Imc$ such that
      \begin{align*}
        \up{\forall r.C}^\Imc(\path re)
          & {} = (\forall r.C)^\Imc(d) \\
          & {} = r^\Imc(d,e)\ra C^\Imc(e) \\
          & {} = \lambda^\Imc(\path re)\ra C^\Imc(\path re).
      \end{align*}
      Since $(\path,\path re)\in r^{\Imc_\crisp}$, this shows that
      $\exists r.\oc{\up{\forall r.C}\ge\lambda\ra C}$ is satisfied
      by~$\path$ in~$\Imc_\crisp$.
      Additionally, for any $r$-successor~$\path re$ of~$\path$ we have
      \begin{align*}
        \up{\forall r.C}^\Imc(\path re)
          & {} = (\forall r.C)^\Imc(d) \\
          & {} \le r^\Imc(d,e)\ra C^\Imc(e) \\
          & {} = \lambda^\Imc(\path re)\ra C^\Imc(\path re),
      \end{align*}
      and thus $\forall r.\oc{\up{\forall r.C}\le\lambda\ra C}$ is also
      satisfied.
    
    For at-least restrictions $\atLeast{n}{r}{C}$, we similarly know that
      there are $n$ different elements $e_1,\dots,e_n$ such that, for all $i$,
      $1\le i\le n$,
      \begin{align*}
        \up{\atLeast{n}{r}{C}}^\Imc(\path re_i)
          & {} = (\atLeast{n}{r}{C})^\Imc(d) \\
          & {} = \min_{j=1}^n\min\{r^\Imc(d,e_j),C^\Imc(e_j)\} \\
          & {} \le \min\{r^\Imc(d,e_i),C^\Imc(e_i)\} \\
          & {} = \min\{\lambda^\Imc(\path re_i),C^\Imc(\path re_i)\}.
      \end{align*}
      Since also the elements $\path re_1$, \dots, $\path re_n$ are different,
      this shows that the at-least restriction
      $\atLeast{n}{r}{\oc{\up{\atLeast{n}{r}{C}}\le\min\{\lambda,C\}}}$ is
      satisfied by~$\Imc_\crisp$ at~$\path$.
      On the other hand, for \emph{all} $n$-tuples
      $(\path re_1,\dots,\path r e_n)$ of different $r$-successors of~$\path$
      and all $i$, $1\le i\le n$, we must have
      \begin{align*}
        \up{\atLeast{n}{r}{C}}^\Imc(\path re_i)
          & {} = (\atLeast{n}{r}{C})^\Imc(d)  \\
          & {} \ge \min_{j=1}^n\min\{r^\Imc(d,e_j),C^\Imc(e_j)\} \\
          & {} = \min_{j=1}^n\min\{\lambda^\Imc(\path re_j),C^\Imc(\path re_j)\},
      \end{align*}
      and thus there must be at least one $j$, $1\le j\le n$, such that
      \[ \path re_j\in
        \oc{\up{\atLeast{n}{r}{C}}\ge\min\{\lambda,C\}}^{\Imc_\crisp}. \]
      In other words, there can be no $n$ different elements of the form
      $\path re$ that satisfy $\path re\in
        \oc{\up{\atLeast{n}{r}{C}}<\min\{\lambda,C\}}^{\Imc_\crisp}$,
      \ie $\path\notin
        \atLeast{n}{r}%
          {\oc{\up{\atLeast{n}{r}{C}}<\min\{\lambda,C\}}}^{\Imc_\crisp}$.
%
%    For at-most restrictions, we have
%      $(\atMost{n}{r}{C})^\Imc(d)\in\{0,1\}$ for all $d\in\Delta^\Imc$ by
%      Proposition~\ref{prop:at-most-semantics}. Hence, the first GCI in the
%      definition of $\red(\atMost{n}{r}{C})$ is satisfied.
%%
%      For the second one, the same proposition yields that
%      $(\atMost{n}{r}{C})^\Imc(d)=1$ iff there are at most $n$ elements
%      $e\in\Delta^\Imc$ with $\min\{r^\Imc(d,e),C^\Imc(e)\}>0$. Using arguments
%      that should be familiar by now, this is equivalent to the semantics of
%      the classical axiom
%      $\oc{1\le(\atMost{n}{r}{C}}\equiv\atMost{n}{r}{\oc{0<\min\{\lambda,C\}}}$
%      under~$\Imc_\crisp$.
% \qed
\end{proof}
In contrast to the reductions for finitely valued Gödel
FDLs~\cite{BDGS-IJAR09,BDGS-IJUF12}, the size of $\red(\Omc)$ is always
polynomial in the size of~\Omc.
The reason is that we do not translate the concepts occurring in the ontology
recursively, but rather introduce a polynomial-sized subontology $\red(C)$ for
each relevant subconcept~$C$.
Moreover, we do not need to introduce role hierarchies for our reduction, since
the value of role connections is expressed using the special element~$\lambda$.
\ExpTime-completeness of concept satisfiability in classical
\ALCQ~\cite{Schi-IJCAI91,Tobi-01} now yields the following result.

\begin{theorem}
\label{thm:local-consistency}
  Local consistency in \Go-\IALCQ is \ExpTime-complete.
\end{theorem}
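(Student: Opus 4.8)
The plan is to establish both bounds. For membership in \ExpTime, I would combine Lemmas~\ref{lem:soundness} and~\ref{lem:completeness}, which together state that \Omc\ is locally consistent iff the classical \ALCQ\ ontology $\red(\Omc)$ is consistent. I would then check that $\red(\Omc)$ is computable in polynomial time and has size polynomial in $|\Omc|$: the number of order concepts $\oc{\alpha\le\beta}$ is bounded by $|\Umc|^2$ with $|\Umc|$ itself polynomial (since $\sub(\Omc)$ is quadratic and \values\ is linear), and each of $\red(\Umc)$, $\red(\uparrow)$, and $\bigcup_C\red(C)$ contributes only polynomially many axioms. A small but important point is that the numbers $n$ occurring in number restrictions are copied verbatim into $\red(\atLeast{n}{r}{C})$, so no blow-up occurs even under binary encoding. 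Since classical \ALCQ\ consistency is decidable in \ExpTime~\cite{Schi-IJCAI91,Tobi-01}, composing the polynomial reduction with this decision procedure decides local consistency in \ExpTime.

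For \ExpTime-hardness I would present local consistency as a conservative generalization of classical \ALC\ concept satisfiability w.r.t.\ general TBoxes, which is already \ExpTime-hard~\cite{Schi-IJCAI91} and whose syntax is a fragment of \Go-\IALCQ. Given a classical concept $C_0$ and a classical TBox $\Tmc_0$, I would build the local consistency instance with ordered ABox $\{\axiom{\assert{a}{C_0}\ge 1}\}$ and the TBox consisting of $\axiom{C\sqsubseteq D\ge 1}$ for every $C\sqsubseteq D\in\Tmc_0$, augmented by the crispness axioms $\axiom{\top\sqsubseteq(A\sqcup\lnot A)\ge 1}$ for all concept names $A\in\NC$ occurring in the input (note that $(A\sqcup\lnot A)^\Imc(d)=1$ forces $A^\Imc(d)\in\{0,1\}$). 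The forward direction is immediate, since any classical model is a \Go-model using only the values~$0$ and~$1$. For the converse I would extract a classical model from a given \Go-model by cutting at the top value, using the enforced crispness to argue that this cut commutes with the concept constructors and that the degree-$1$ GCIs become classical GCIs. This equivalence is precisely the generalization already recorded in~\cite{BoPe-JoDS13}.

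The step I expect to be the main obstacle is the faithfulness of this embedding in its backward direction, because the involutive negation ${\sim}x=1-x$ does not in general commute with the top cut: a subconcept may take a strictly intermediate value whenever a fuzzy role degree lies strictly between~$0$ and~$1$, as already happens for $\exists r.C=\atLeast{1}{r}{C}$. The crispness axioms settle this for concept names, but one must still verify that, under these axioms, every subconcept relevant to $C_0$ and $\Tmc_0$ evaluates to~$0$ or~$1$ (equivalently, that the roles may be assumed crisp); only then does the cut produce a genuine classical model and the reduction become correct. Once this is handled, combining the \ExpTime\ upper bound with the \ExpTime\ lower bound yields \ExpTime-completeness of local consistency in \Go-\IALCQ.
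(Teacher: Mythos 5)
Your upper bound is exactly the paper's argument: Lemmas~\ref{lem:soundness} and~\ref{lem:completeness} reduce local consistency to classical \ALCQ consistency, the reduction $\red(\Omc)$ is of polynomial size and polynomial-time computable (the paper makes the same size observations, and your remark that the numbers $n$ are copied verbatim is precisely what one must check to invoke~\cite{Tobi-01} under binary coding), and the lower bound is inherited because local consistency generalizes classical concept satisfiability. The gap is in your explicit hardness reduction. The verification you defer --- that the axioms $\axiom{\top\sqsubseteq(A\sqcup\lnot A)\ge 1}$ for concept \emph{names} force every relevant subconcept to be two-valued --- is false, and you essentially exhibit the counterexample yourself: take $A^\Imc(e)=1$, $r^\Imc(d,e)=0.7$, and no other $r$-successor of~$d$; then $(\exists r.A)^\Imc(d)=0.7$ although every concept name is crisp, and no axiom of your reduced ontology excludes this model. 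The cut at~$1$ is then not a homomorphism: $(\lnot\exists r.A)^\Imc(d)=0.3$, so $d$ lands neither in the cut of $\exists r.A$ nor in the cut of its negation. Roles cannot be forced crisp by concept axioms, so the parenthetical ``equivalently, that the roles may be assumed crisp'' is not available either, and the backward direction of your reduction does not close as stated.

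The repair is cheap, which is presumably why the paper does not spell it out and instead leans on the generalization result it cites from~\cite{BoPe-JoDS13} together with classical \ExpTime-hardness~\cite{Schi-IJCAI91}. Either (i) add $\axiom{\top\sqsubseteq(D\sqcup\lnot D)\ge 1}$ for \emph{every} subconcept~$D$ of the input (still only polynomially many axioms); then all subconcepts are two-valued, and the map sending $d$ to $A$ iff $A^\Imc(d)=1$ and $(d,e)$ to $r$ iff $r^\Imc(d,e)>0$ is a genuine classical model, because for crisp~$C$ one has $(\atLeast{n}{r}{C})^\Imc(d)>0$ iff there are $n$ distinct $e_i$ with $r^\Imc(d,e_i)>0$ and $C^\Imc(e_i)=1$, and the added crispness axiom upgrades ``${>}0$'' to ``${=}1$''. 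Or (ii) avoid the involutive negation in the source altogether: translate classical negation as $C\to\bot$ (residual negation) and check that the ``value ${>}0$'' cut commutes with $\sqcap$, $\to\bot$, $\forall r$, and $\atLeast{n}{r}{}$ under witnessed G\"odel semantics, so any fuzzy model of the degree-$1$ ontology immediately yields a classical one. With either fix the lower bound holds and your conclusion stands.
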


\section{Conclusions}

Using a combination of techniques developed for infinitely valued Gödel
extensions of \ALC~\cite{BoDP-KR14} and for finitely valued Gödel extensions
of \SROIQ~\cite{BDGS-IJAR09,BDGS-IJUF12}, we have shown that local consistency
in infinitely valued \Go-\IALCQ is \ExpTime-complete. Our reduction is more
practical than the automata-based approach proposed by~\citeauthor{BoDP-KR14}~\shortcite{BoDP-KR14} 
and does not exhibit the exponential blowup of the reductions 
developed by~\citeauthor{BDGS-IJAR09}~\shortcite{BDGS-IJAR09,BDGS-IJUF12}.
Beyond the complexity results, an important benefit of our approach is that it does 
not need the development of a specialized fuzzy DL reasoner, but can use any
state-of-the-art reasoner for classical \ALCQ without modifications.
For that reason, this new reduction aids to shorten the gap between efficient classical and fuzzy DL reasoners.

In future work, we want to extend this result to full consistency, possibly
using the notion of a \emph{pre-completion} as introduced in~\cite{BoDP-KR14}. Our
ultimate goal is to provide methods for reasoning efficiently in infinitely valued Gödel extensions of the
very expressive DL \SROIQ, underlying OWL\,2\,DL.
We believe that it is possible to treat
transitive roles, inverse roles, role hierarchies, and nominals using the
extensions of the automata-based approach developed originally for finitely valued FDLs
in~\cite{BoPe-JoDS13,BoPe-14,Borg-DL14}.

As done previously in~\cite{BDGS-IJUF12}, we can also combine our reduction with the
one for infinitely-valued Zadeh semantics. Although Zadeh semantics is not based on t-norms, 
it nevertheless is important to handle it correctly, as it is one of the most widely used semantics for fuzzy
applications. It also has some properties that make it closer to the classical semantics, and hence become
a natural choice for simple applications.

A different direction for future research would be to integrate our reduction
directly into a classical tableaux reasoner. Observe that the definition of $\red(C)$ is
already very close to the rules employed in (classical and fuzzy) tableaux
algorithms (see, e.g.~\cite{BaSa-SL01,BoSt-FSS09}). However, the tableaux procedure would
need to deal with total preorders in each node, possibly using an external
solver.

\section*{Acknowledgements}
This work was partially supported by the German Research Foundation (DFG) under the grant
BA 1122/17-1 (FuzzyDL) and within the Cluster of Excellence `cfAED.' Most of this work was developed
while R.\ Pe\~naloza was still affiliated with TU Dresden and the Center for Advancing Electronics Dresden,
Germany.

\bibliographystyle{named}
\bibliography{medium-string,GoedelSROIQ}

\begin{thebibliography}{}

\bibitem[\protect\citeauthoryear{Baader and Pe{\~n}aloza}{2011}]{BaPe-FroCoS11}
Franz Baader and Rafael Pe{\~n}aloza.
\newblock On the undecidability of fuzzy description logics with {GCI}s and
  product t-norm.
\newblock In {\em Proc.\ of the 8th Int.\ Symp.\ on Frontiers of Combining
  Systems (FroCoS'11)}, volume 6989 of {\em Lecture Notes in Computer Science},
  pages 55--70. Springer-Verlag, 2011.

\bibitem[\protect\citeauthoryear{Baader and Sattler}{2001}]{BaSa-SL01}
Franz Baader and Ulrike Sattler.
\newblock An overview of tableau algorithms for description logics.
\newblock {\em Studia Logica}, 69(1):5--40, 2001.

\bibitem[\protect\citeauthoryear{Baader \bgroup \em et al.\egroup
  }{2007}]{BCM+07}
Franz Baader, Diego Calvanese, Deborah~L. McGuinness, Daniele Nardi, and
  Peter~F. Patel-Schneider, editors.
\newblock {\em The Description Logic Handbook: Theory, Implementation, and
  Applications}.
\newblock Cambridge University Press, 2nd edition, 2007.

\bibitem[\protect\citeauthoryear{Bobillo and Straccia}{2009}]{BoSt-FSS09}
Fernando Bobillo and Umberto Straccia.
\newblock Fuzzy description logics with general t-norms and datatypes.
\newblock {\em Fuzzy Sets and Systems}, 160(23):3382--3402, 2009.

\bibitem[\protect\citeauthoryear{Bobillo and Straccia}{2011}]{BoSt-ISC11}
Fernando Bobillo and Umberto Straccia.
\newblock Reasoning with the finitely many-valued {{\L}}ukasiewicz fuzzy
  description logic {$\mathcal{SROIQ}$}.
\newblock {\em Information Sciences}, 181:758--778, 2011.

\bibitem[\protect\citeauthoryear{Bobillo and Straccia}{2013}]{BoSt-13}
Fernando Bobillo and Umberto Straccia.
\newblock Finite fuzzy description logics and crisp representations.
\newblock In {\em Uncertainty Reasoning for the Semantic Web {II}}, volume 7123
  of {\em Lecture Notes in Computer Science}, pages 102--121. Springer-Verlag,
  2013.

\bibitem[\protect\citeauthoryear{Bobillo \bgroup \em et al.\egroup
  }{2009}]{BDGS-IJAR09}
Fernando Bobillo, Miguel Delgado, Juan G{\'o}mez-Romero, and Umberto Straccia.
\newblock Fuzzy description logics under {G}{\"o}del semantics.
\newblock {\em International Journal of Approximate Reasoning}, 50(3):494--514,
  2009.

\bibitem[\protect\citeauthoryear{Bobillo \bgroup \em et al.\egroup
  }{2012}]{BDGS-IJUF12}
Fernando Bobillo, Miguel Delgado, Juan G{\'o}mez-Romero, and Umberto Straccia.
\newblock Joining {G}{\"o}del and {Z}adeh fuzzy logics in fuzzy description
  logics.
\newblock {\em International Journal of Uncertainty, Fuzziness and
  Knowledge-Based Systems}, 20(4):475--508, 2012.

\bibitem[\protect\citeauthoryear{Borgwardt and
  Pe{\~n}aloza}{2013}]{BoPe-JoDS13}
Stefan Borgwardt and Rafael Pe{\~n}aloza.
\newblock The complexity of lattice-based fuzzy description logics.
\newblock {\em Journal on Data Semantics}, 2(1):1--19, 2013.

\bibitem[\protect\citeauthoryear{Borgwardt and Pe{\~n}aloza}{2014}]{BoPe-14}
Stefan Borgwardt and Rafael Pe{\~n}aloza.
\newblock Finite lattices do not make reasoning in {$\mathcal{ALCOI}$} harder.
\newblock In {\em Uncertainty Reasoning for the Semantic Web III}, volume 8816
  of {\em Lecture Notes in Artificial Intelligence}, pages 122--141.
  Springer-Verlag, 2014.

\bibitem[\protect\citeauthoryear{Borgwardt and
  Pe{\~n}aloza}{2015}]{BoPe-FroCoS15}
Stefan Borgwardt and Rafael Pe{\~n}aloza.
\newblock Reasoning in expressive description logics under infinitely valued
  g{\"o}del semantics.
\newblock In {\em Proc. of FroCoS 2015}, Lecture Notes in Computer Science.
  Springer-Verlag, 2015.
\newblock To appear.

\bibitem[\protect\citeauthoryear{Borgwardt \bgroup \em et al.\egroup
  }{2014}]{BoDP-KR14}
Stefan Borgwardt, Felix Distel, and Rafael Pe{\~n}aloza.
\newblock Decidable {G}{\"o}del description logics without the finitely-valued
  model property.
\newblock In {\em Proc.\ of the 14th Int.\ Conf.\ on Principles of Knowledge
  Representation and Reasoning (KR'14)}, pages 228--237. {AAAI} Press, 2014.

\bibitem[\protect\citeauthoryear{Borgwardt \bgroup \em et al.\egroup
  }{2015}]{BoDP-AI15}
Stefan Borgwardt, Felix Distel, and Rafael Pe{\~n}aloza.
\newblock The limits of decidability in fuzzy description logics with general
  concept inclusions.
\newblock {\em Artificial Intelligence}, 218:23--55, 2015.

\bibitem[\protect\citeauthoryear{Borgwardt}{2014}]{Borg-DL14}
Stefan Borgwardt.
\newblock Fuzzy {DL}s over finite lattices with nominals.
\newblock In {\em Proc.\ of the 27th Int.\ Workshop on Description Logics
  (DL'14)}, volume 1193 of {\em CEUR Workshop Proceedings}, pages 58--70, 2014.

\bibitem[\protect\citeauthoryear{Cerami and Straccia}{2013}]{CeSt-ISc13}
Marco Cerami and Umberto Straccia.
\newblock On the (un)decidability of fuzzy description logics under
  {{\L}}ukasiewicz t-norm.
\newblock {\em Information Sciences}, 227:1--21, 2013.

\bibitem[\protect\citeauthoryear{H{\'a}jek}{2001}]{Haje-01}
Petr H{\'a}jek.
\newblock {\em Metamathematics of Fuzzy Logic (Trends in Logic)}.
\newblock Springer-Verlag, 2001.

\bibitem[\protect\citeauthoryear{H{\'a}jek}{2005}]{Haje-FSS05}
Petr H{\'a}jek.
\newblock Making fuzzy description logic more general.
\newblock {\em Fuzzy Sets and Systems}, 154(1):1--15, 2005.

\bibitem[\protect\citeauthoryear{Kazakov}{2004}]{Kaza-JELIA04}
Yevgeny Kazakov.
\newblock A polynomial translation from the two-variable guarded fragment with
  number restrictions to the guarded fragment.
\newblock In {\em Proc.\ of the 9th Eur.\ Conf.\ on Logics in Artificial
  Intelligence (JELIA'04)}, volume 3229 of {\em Lecture Notes in Computer
  Science}, pages 372--384. Springer-Verlag, 2004.

\bibitem[\protect\citeauthoryear{Klement \bgroup \em et al.\egroup
  }{2000}]{KlMP-00}
Erich~Peter Klement, Radko Mesiar, and Endre Pap.
\newblock {\em Triangular Norms}.
\newblock Trends in Logic, Studia Logica Library. Springer-Verlag, 2000.

\bibitem[\protect\citeauthoryear{Schild}{1991}]{Schi-IJCAI91}
Klaus Schild.
\newblock A correspondence theory for terminological logics: Preliminary
  report.
\newblock In {\em Proc.\ of the 12th Int.\ Joint Conf.\ on Artificial
  Intelligence (IJCAI'91)}, pages 466--471. Morgan Kaufmann, 1991.

\bibitem[\protect\citeauthoryear{Schmidt-Schau{\ss} and
  Smolka}{1991}]{ScSm-AI91}
Manfred Schmidt-Schau{\ss} and Gert Smolka.
\newblock Attributive concept descriptions with complements.
\newblock {\em Artificial Intelligence}, 48(1):1--26, 1991.

\bibitem[\protect\citeauthoryear{Straccia}{2001}]{Stra-JAIR01}
Umberto Straccia.
\newblock Reasoning within fuzzy description logics.
\newblock {\em Journal of Artificial Intelligence Research}, 14:137--166, 2001.

\bibitem[\protect\citeauthoryear{Straccia}{2004}]{Stra-JELIA04}
Umberto Straccia.
\newblock Transforming fuzzy description logics into classical description
  logics.
\newblock In {\em Proc.\ of the 9th Eur.\ Conf.\ on Logics in Artificial
  Intelligence (JELIA'04)}, volume 3229 of {\em Lecture Notes in Computer
  Science}, pages 385--399. Springer-Verlag, 2004.

\bibitem[\protect\citeauthoryear{Tobies}{2001}]{Tobi-01}
Stephan Tobies.
\newblock {\em Complexity Results and Practical Algorithms for Logics in
  Knowledge Representation}.
\newblock PhD thesis, RWTH Aachen, Germany, 2001.

\end{thebibliography}

\end{document}